\newcommand{\Km}{\sqrt{\frac{n}{\log m}}}
\newcommand{\Kmm}{\sqrt{\frac{\log m}{n}}}
\newcommand{\Pro}{\mathbb P}
\newcommand{\Ex}{\mathbf E}
\newcommand{\Var}{\mathbf{Var}}
\newcommand{\ths}{\theta^*}
\newcommand{\tth}{\tilde{\theta}}
\newcommand{\hth}{\hat{\theta}}
\newcommand{\sign}{{\rm sign}}
\newcommand{\indyk}{\mathbb{I}}
\newcommand{\mX}{\mathcal{X}}
\newcommand{\mY}{\mathcal{Y}}
\newcommand{\mF}{\mathcal{F}}
\newcommand{\mG}{\mathcal{G}}
\newcommand{\mZ}{\mathcal{Z}}
\newcommand{\bF}{\mathbf{F}}
\newcommand{\hlamn}{\hat{\lambda}_n}
\newcommand{\lamn}{\lambda_n}
\newcommand{\fthe}{f_\theta}
\newcommand{\fths}{f_{\theta^*}}
\newcommand{\fhth}{f_{\hat{\theta}}}
\newcommand{\ftth}{f_{\tilde{\theta}}}
\newcommand{\hC}{\hat{C}}
\newcommand{\tphif}{\tilde{\phi}_f}
\newcommand{\tpfth}{\tilde{\phi}_{f_\theta}}
\newtheorem{lemma}{Lemma}
\newtheorem{theorem}{Theorem}
\newtheorem{definition}{Definition}
\title{Oracle inequalities for ranking and $U$-processes with Lasso penalty}
\author{Wojciech Rejchel\\
{\small Faculty of Mathematics and Computer Science}\\
{\small Nicolaus Copernicus University}\\
{\small   Chopina 12/18, 87-100 Toru\'n, Poland}\\
{\small Faculty of Mathematics, Informatics and Mechanics}\\
{\small University of Warsaw}\\
{\small   Banacha 2, 02-097 Warsaw, Poland}\\
{\small  wrejchel@gmail.com}}
\date{}
\begin{document}
\maketitle

\begin{abstract}
We investigate properties of estimators obtained by minimization of $U$-processes with 
the Lasso penalty in high-dimensional settings.  Our attention is focused on the ranking problem that is popular in machine learning. It is related to  
guessing the ordering between objects on the basis of their observed predictors.
We prove the oracle inequality for the excess risk of the considered estimator as well as
the bound for the $l_1$ distance $|\hth - \ths|_1$ between the estimator and the oracle.
\end{abstract}

{\bf Keywords:} high-dimensional problem, machine learning, oracle inequality, penalized risk minimization, sparse model, $U$-process.

\vspace{0.3 cm}

\section{Introduction}
\label{intro}

Model selection is an important challenge, if one works with data sets containing many predictors. Finding relevant variables helps to understand better 
the problem and improves statistical inference. In the literature there are many methods 
solving such problems. One of them is empirical risk minimization with the penalty, for instance Lasso \citep{tib:96}.
The main characteristic of this procedure is an ability to select
relevant predictors and estimate unknown parameters simultaneously. In the paper we apply these ideas to the pairwise ranking problem (ordinal regression) 
that relates to predicting or guessing the ordering between objects on the basis of their observed predictors.  The problem of ranking has numerous applications in practice, for instance in information retrieval, banking or quality control. This is one of the reasons why it has been extensively studied recently \citep{fss:04,cortesmoh:04, aghhr:05, cz:06, rudin:06, clv:08, rej:12, chenwu:13, lai:13, laporte:14}.
 The ranking  problem is one of the examples of using $U$-processes
in machine learning.
The goal of the paper is to investigate properties of estimators obtained by minimization of 
$U$-processes with the Lasso penalty. We study the quality of such estimators in prediction  as well as model selection. We focus our attention on ranking estimators, but our results can be easily generalized as we will explain below. We start the paper with describing the statistical framework of the ranking problem.

Let us consider two objects that are randomly selected from a population. We assume that they are described by  a pair of
independent and identically distributed (with respect to the measure $P$) random vectors
$Z=(X,Y)$ and
$Z'=(X',Y')$ taking values in $\mathcal{X}
\times \mY,$ where $\mathcal{X} \subset \mathbb{R}^d$ and $\mY \subset \mathbb{R}. $ Random vectors $X$ and $X'$
are regarded as predictors, while $Y$ and $Y'$ are unknown variables
that define the ordering between objects. Namely, the object $z$ is ''better'' (faster, stronger etc.) than the object $z',$ if $y >y'.$
Our task is to predict the ordering between objects only on the basis of observations $X, X'.$ To do it we construct functions $f : \mathcal{X} \times \mathcal{X} \rightarrow \mathbb{R}$, called ranking rules,
that predict the ordering between objects
in the following way:
$$
\textrm{if }\; f (x , x') > 0, \;
\textrm{then we predict that } \; y > y'.
$$

The natural approach is to look for a function $f$ which, for a fixed loss function
$\phi: \mathbb{R} \rightarrow [0, \infty),$ minimizes the theoretical  risk
\begin{equation}\label{w1}
Q(f) =  \Ex  \phi \left[{\rm sign }(Y-Y') f (X,X') \right]
\end{equation}
in some family of  ranking rules $\mathbf{F}$
and sign$(t)=1$ for $t>0,$ sign$(t)=-1$ for $t<0$ and sign$(t)=0$ for $t=0.$
We cannot calculate $f^0={\rm arg} \min\limits_{f \in \mathbf{F}} Q(f) $ directly, but if we possess a learning sample of independent and identically distributed (with respect to the measure $P$) random vectors $Z_1 =(X_1,Y_1), \ldots ,Z_n =(X_n, Y_n),$ then we can minimize the empirical analog of the risk \eqref{w1}.
In the ranking problem a natural candidate for the empirical risk is a $U$-statistic
\begin{equation}
\label{w2}
Q_n (f)= \frac{1}{n(n-1)} \sum_{i \neq j} \phi_f (Z_i, Z_j ),
\end{equation}
where $\phi_f (z,z') =\phi [\mathrm{sign} (y-y') \: f(x,x')].$
The sum in \eqref{w2} is taken over all pairs of distinct indices $(i,j) \in \{1, \ldots, n\}^2.$ Obviously, we could also divide a sample into two halves and consider the unbiased estimator of the risk \eqref{w1} as an average of independent random variables 
\begin{equation}
\label{ww1}
 \frac{1}{N} \sum_{i=1}^N
 \phi_f \left(Z_i, Z_{N+i} \right),
\end{equation}
where $N=\left\lfloor \frac{n}{2} \right\rfloor .$ 
We could also divide a sample into even and odd observations and so on. However, the $U$-statistic \eqref{w2} is an average over all such divisions, that is for every permutation $\pi$ of a set $\{1, \ldots, n\}$ we have
$$\frac{1}{n(n-1)} \sum_{i \neq j} \phi_f(Z_i, Z_j)   =
\frac{1}{n!} \sum_\pi \frac{1}{N} \sum_{i=1}^N
 \phi_f \left(Z_{\pi(i)}, Z_{\pi(N+i)} \right).$$ Thus, the esimator \eqref{w2} seems to be more ''stable'' than \eqref{ww1}. Indeed, provided that mild conditions are satisfied the estimator \eqref{w2} has least variance among unbiased estimators of \eqref{w1}, see
\citet{serf:80}.

The  family $\mF ,$ that the estimator of $f^0$ will come from, is a class of linear combinations of a given collection of functions. Namely, consider a finite (but usually very large) family of base functions  
$\{ \psi_1, \ldots, \psi_m\},$
where $\psi_k : \mX \times \mX \rightarrow \mathbb{R}, k=1, \ldots m.$ We focus on the family
\begin{equation}
\label{FF}
\mathcal{F} = \left\{ f_\theta (x, x')= \sum_{k=1}^m \theta_k \psi_k (x, x'):
\theta \in \Theta \right\} \subset \bF,
\end{equation}
and the set $\Theta$ is a convex subset of $\mathbb{R}^m.$
If $m=d$ and $\psi_k (x,x')= x_k -x'_k, $ then $\mF$ is a class of linear ranking rules.
In this case we can think that the number of predictors $d$ that we observe is huge, but only a few of them are significant in the model. In this sense we say that the model is ''sparse''.

Furthermore, in high-dimensional problems one usually adds the penalty to the empirical risk \eqref{w2}. In the paper we consider the Lasso penalty, so we are to minimize, over the family \eqref{FF}, a function 
\begin{equation}
\label{w3}
 Q_n(f_\theta) + \hlamn |\theta |_1,
\end{equation}
where $|\cdot|_1$ is the $l_1$-norm of a vector $\theta$, i.e. $ | \theta |_1= \sum\limits_{k=1}^m |\theta_k|,$ and  $\hlamn>0$ is a smoothing parameter that will be estimated from the data, as it will be explained later.
This parameter is a balance between minimizing the empirical risk \eqref{w2}
and the penalty. We denote the minimizer of \eqref{w3} by $f_{\hth}.$
The form of the penalty is crucial,
because its singularity at the origin implies that some coordinates of the minimizer are
exactly  zeros, if $\hlamn$ is sufficiently large. Thus, by minimizing the function \eqref{w3} we simultaneously select significant parameters in the model and estimate them. 
Finally, we assume that the model is ''sparse'', that is using only few base functions we can 
appropriately approximate the best function $f^0 $ with respect to the risk.

The $0-1$ loss function $\phi(t) = \mathbb{I}_{(- \infty, 0)} (t)$ seems to be the
most adequate, and  the theoretical  risk \eqref{w1} becomes  probability of incorrect
ranking. However, in this case the empirical risk \eqref{w2} is discontinuous and minimization of the function \eqref{w3} is computationally difficult. Thus, we assume that the loss fuction $\phi$ is convex which makes the function \eqref{w3} convex with respect to  $\theta.$ 

In the literature one can find many papers concerning properties of Lasso estimators. They investigate the risk of estimators  as well as  model selection ability \citep{ greenshtein:06, geertar:06, zhaoyu:06, meinbul:06, buneatw:07, geer:08,bulgeer:11, nega:12, huang:12}. 
However, these papers study the case that one minimizes an empirical process (the sum of independent random variables) with the Lasso penalty. In the current paper we are to minimize \eqref{w3}, that is a $U$-process with the Lasso penalty, and we want to study properties of estimators based on such minimization. Obviously, $U$-processes are more complex than empirical processes, in fact they are generalization of empirical processes. Characteristics of ranking estimators with the penalty were investigated, among others, in \citet{clv:08, rej:12,chenwu:13}, but these results consider only their quality 
in predicting the ordering between objects.
In many practical problems (in genetics or biology) finding a (small) subset of significant predictors is equally as important (or even more) as predicting accurately  the ordering
between objects. Therefore, in the current paper we study both prediction and model selection quality of estimators obtained by minimization of \eqref{w3}. The practical performance of such estimators was studied, among others, in \citet{lai:13, laporte:14} on the basis of popular data sets. Thus, we focus on theoretical properties of ranking estimators and  
 prove similar theorems to those  contained in  papers that are cited at the beginning of this paragraph. 

In our main result (Theorem \ref{main}) we consider the excess risk of the estimator, that 
is $Q(\fhth) -Q(f^0) .$ It is a difference between the risk of the estimator and the risk of the best ranking rule in the class $\bF .$ We show that the estimator $\fhth$ behaves almost like the ''oracle'' that knows which parameters (base functions) should be included in the 
model by balancing the approximation risk and sparseness.
Moreover, in the same theorem we show the inequality for the $l_1$-distance $|\hth - \ths|_1$ between the estimator and the oracle that concerns model selection quality of the procedure.
Our results are nonasymptotic and allow the number of parameters $m$ to grow polynomially 
fast with the sample size $n.$
Theorem \ref{main} is a
strict analog of \citet[Theorem 2.2]{geertar:06} or \citet[Theorem 6.4]{bulgeer:11}, but
it relates to $U$-processes with the Lasso penalty instead of empirical processes. 

We have already mentioned that  the ranking problem with the Lasso penalty is contained in the general problem of minimizing a
$U$-process with the Lasso penalty.  $U$-processes are  used in many problems of statistical learning, for instance in generalized 
regression models or survival analysis \citep{han:87, sherman:93, kleinsp:92,pg:99, bobluk:12}. Properties of ranking estimators with the Lasso penalty that are described in 
Theorem \ref{main} can be generalized to $U$-processes with the Lasso penalty. Indeed, for a function $f: \mX \times 
\mX \rightarrow \mathbb{R}$ we define its risk as 
$$
\tilde{Q}(f) = \Ex \; \tilde{\phi} \left[f(X,X'),Y,Y' \right],
$$
where $\tilde{\phi} : \mathbb{R} \times \mY \times \mY \rightarrow \mathbb{R} $ is the loss function and $Z=(X,Y), Z'=(X',Y') \in \mX \times 
\mY$ are independent random vectors with the same distribution $P.$ Moreover, assume that 
random vectors $Z_1 =(X_1,Y_1), \ldots ,Z_n =(X_n, Y_n)$ are independent copies of them.
Define families $\bF, \mF$  as above and introduce a notation $\tphif (z,z') =\tilde{\phi} [ f(x,x'), y, y'].$
Then the unknown function $f^0={\rm arg} \min\limits_{f \in \mathbf{F}} \tilde{Q}(f) $  can be estimated by the minimizer $\fhth$ of a $U$-process with the Lasso penalty
\begin{equation}
\label{w4}
\tilde{Q}_n(\fthe) + \hlamn |\theta |_1,
\end{equation}
where
$$
\tilde{Q}_n(\fthe)= \frac{1}{n(n-1)} \sum_{i \neq j } \tpfth(Z_i,Z_j).
$$
Results stated for ranking estimators in the rest of the paper can be easily extended to 
minimizers of \eqref{w4}.

The paper is organized as follows: in Section~\ref{AandM} we state the main theorem of the paper  and discuss it. Its assumptions are elaborated in Section \ref{discussion}. The proof of Theorem \ref{main} is divided into two parts that 
are contained in Appendices \ref{stochar} and \ref{detar}. We conclude the paper in Section 
\ref{conclusions}.

\section{Assumptions and main result}
\label{AandM}

We start this section with presenting and briefly discussing conditions that are needed to obtain the results. We explain them more precisely in the next section.

{\bf Assumption 1.} The loss function $\phi$ is convex, besides it satisfies the Lipschitz condition with the constant $L>0$, that is 
\begin{equation}
\label{lip}
|\phi_{f_\theta} (z,z') -  \phi_{f_{\theta '}} (z,z')| \leq L \, |f_\theta (x,x') - 
f_{\theta '} (x,x')|
\end{equation}
for every $\theta, \theta ' \in \Theta$ and $z=(x,y) , z'=(x',y') \in \mX \times \mY.$ 
We have already mentioned that convexity of the loss function is needed to make the procedure 
computationally effective. However, it is also substantially used in the proof of
Theorem \ref{main}, for instance 
''to localize the problem'' (we can investigate only the neighbourhood of the oracle).

{\bf Assumption 2.} Let $\bF$ be a family of functions with a pseudonorm $|| \cdot ||.$ There exists a strictly convex function 
$G:[0, \infty) \rightarrow [0, \infty)$ with $G(0) = 0,$ such that for every $f_\theta \in
\mF$ 
 we have
\begin{equation}
\label{margin}
Q(\fthe) - Q(f^0) \geq G\left( || \fthe -f^0|| \right).
\end{equation}
    
Similar conditions that uniformly bound the excess risk  are often called  ''margin conditions'' in the literature \citep{mamtsyb:99, geer:08, bulgeer:11}. They are necessary to obtain 
''fast rates'' for estimators based on empirical processes \citep{m:02, bbm:05,bja:06} as well as $U$-processes \citep{clv:08, rej:12}. Fast rates refer to probabilistic inqualities that bound the estimation risk  by the expression depending on the sample size 
$n$ like $n^{-\beta}$ with $\beta > \frac{1}{2}\:.$ Furthermore, notice that Assumption 2 
(as well as Assumption 3, below) depends on the choice of the pseudonorm $||\cdot ||.$

For the function $G$ given in Assumption 2 we define its 
convex conjugate as a function $H: [0, \infty) \rightarrow [0,\infty) $ given as
$$
H(v)= \sup_{u \geq 0} \{uv - G(u)\}
$$
for every $v \geq 0.$ For instance, if $G(u) = a u^{\frac{2}{\alpha}}$ for a positive constant $a$ and $\alpha \in (0,1],$ then $H(v)= \frac{2-\alpha}{2}
\left(
\frac{\alpha }{2a}\right)^{\frac{\alpha}{2-\alpha}} \, v^{\frac{2}{2-\alpha}}  \:. $ The function $H$ will be involved in  a bound for the estimation risk of the estimator and will determine the rate of the oracle inequality.

Before we state the next assumption we need a few notations and a definition. 
Let $S$ be a subset of $\{1,\ldots, m\}.$ We denote by $|S|$ the number of elements of the 
set $S$ and its complement by $S'= \{1,\ldots, m\} \backslash S.$ Moreover, let $\theta_S$ be the vector that  equals  $\theta$ on the set $S$ and has zeros elsewhere, i.e.
$(\theta_S)_k = \theta_k \; \indyk (k \in S),  k=1, \ldots, m.$
Thus, we have $\theta= \theta_S + \theta_{S'}.$

\begin{definition}
The compatibility condition is fulfilled for the set $S$ with a constant $A(S)$, if 
for every $\theta \in \mathbb{R}^m$ satisfying $|\theta_{S'}|_1 \leq 3 |\theta_{S}|_1$ the 
following inequality holds
\begin{equation}
\label{cc}
|\theta_{S}|_1 \leq \frac{||\fthe|| \, \sqrt{|S|} }{A(S)} \: .
\end{equation}
\end{definition}
The above definition is borrowed from \citet[Section 6.7]{bulgeer:11}. It constitutes the relation between the pseudonorm $|| \cdot ||$ and the norm $|\cdot|_1.$ 
The compatibility condition or similar assumptions such as the coherence condition or the restricted eigenvalue condition are standard in high-dimensional problems \citep{buneatw:07, geer:08, bickel:09, bulgeer:11}.

{\bf Assumption 3.} The set $\Theta_1$ that minimization in \eqref{oracle}, given below,   is taken over consists of 
vectors $\theta$ such that the compatibility condition is satisfied for 
$S_\theta = \{ 1\leq k \leq m: \theta_k \neq 0\} .$ 
We will call $S_\theta$ the support of the vector $\theta.$

The last technical assumption helps us to handle the stochastic part of the proof of the main result.

{\bf Assumption 4.} Suppose that 
\begin{equation*}
\max_{1 \leq k \leq m} \left| \psi_k \right|_\infty \leq \sqrt{\frac{n}{\log m}} \:,
\end{equation*}
where $|g|_\infty = \sup_{x,x'} |g(x,x')|.$

Moreover, we assume that the value
$$
C^2= \max_{1 \leq k \leq m} \Ex \psi_k^2 (X,X')
$$
is finite. We do not need to know  the value of $C^2,$ the estimator $\hC ^2$ of it 
will be sufficient in further 
argumentation. We set $\hC^2$ to be the empirical version of $C^2$, namely  it is 
defined by the following formula 
$$
\hC^2= \max_{1 \leq k \leq m} \;\frac{1}{n(n-1)} \sum_{i \neq j} \psi_k^2 (X_i,X_j).
$$
Having the estimator $\hC ^2$ we can take the smoothing parameter $\hlamn$ in \eqref{w3} as 
$$
\hlamn = BL \Kmm \; \max\left( \hC, 6\right) ,
$$
where $B$ is a universal constant. Its theoretical analog is denoted  by 
$$
\lamn = BL \Kmm \; \max\left( C, 6\right). 
$$
The form of $\lamn$ and $\hlamn$ are determined by the probabilistic inequality in Theorem \ref{th_random}. We will discuss it later.

Except the above assumptions we will need a few more  notations to state the main theorem. The first one is the oracle
$\ths$ that is defined as 
\begin{equation}
\label{oracle}
\ths = \arg \min_{\theta \in \Theta_1} \left\{(1+4 \delta)\left[Q(\fthe) - Q(f^0)\right] +
8\delta H \left(\frac{\lambda_n\sqrt{|S_\theta|}}{\delta A (S_\theta)}\right)
\right\},
\end{equation}
where $\delta $ is an arbitrary number in $\left(0,\frac{1}{4}\right)$  and the set $\Theta_1$ consists of such $\theta \in \Theta$  that the compatibility condition holds for 
$S_\theta$ (see Assumption 3). The oracle is the  element from the subclass of the family $\mF$ that is able, 
in the best possible way, to predict the ordering between objects and to select the sparse model simultaneously.  Furthermore, set
$S_*= S_{\ths}, A_* = A(S_*)$ and 
\begin{equation}
\label{error}
\varepsilon^* = (1+4\delta)\left[Q(\fths) - Q(f^0)\right] +
8\delta H \left(\frac{\lambda_n\sqrt{|S_*|}}{\delta A_*}\right).
\end{equation}
Now we can state the main result of the paper.
\begin{theorem}
\label{main}
Suppose that $\delta \in \left(0,\frac{1}{4}\right)$ is an arbitrary number and  Assumptions 1-4 are satisfied. 
Probability that at least one of the inequalities
\begin{equation}
\label{main1}
(1-4 \delta)\left[Q(\fhth) - Q(f^0)\right] + \hlamn |\hth - \ths|_1 \leq 
2 \varepsilon^* 
\end{equation}
or
\begin{equation}
\label{main2}
Q(\fhth) - Q(f^0) + \hlamn |\hth - \ths|_1 \leq 
4 \varepsilon^* 
\end{equation}
is satisfied is at least $1- \frac{3}{m^2} $.
\end{theorem}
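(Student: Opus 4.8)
The plan is to split the argument, exactly as the authors announce, into a stochastic part and a deterministic part, and to work throughout on a single ``good'' event of probability at least $1-3/m^2$ furnished by Theorem~\ref{th_random}. On that event two things hold at once: the centered $U$-process is controlled so that its increment between any $\theta$ and $\ths$ is dominated by a small multiple of $\hlamn\,|\theta-\ths|_1$, and the empirical constant $\hC$ is close enough to $C$ that $\hlamn$ and $\lamn$ are comparable. The factor $3$ in the probability should come from a union over three failure modes: the upper and lower tails of the (H\'ajek-projected) linear part of the $U$-statistic, maximized over the $m$ base functions, and the deviation of $\hC^2$ from $C^2$; each contributes an $m^{-2}$ term.

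First I would write the basic inequality coming from the fact that $\hth$ minimizes \eqref{w3}, namely $Q_n(\fhth)+\hlamn|\hth|_1\le Q_n(\fths)+\hlamn|\ths|_1$. Subtracting $Q(f^0)$ and adding and subtracting the population risks turns this into
\begin{equation*}
Q(\fhth)-Q(f^0)+\hlamn|\hth|_1\le \big[(Q-Q_n)(\fhth)-(Q-Q_n)(\fths)\big]+Q(\fths)-Q(f^0)+\hlamn|\ths|_1 .
\end{equation*}
The bracketed term is the increment of the centered $U$-process. Using the Lipschitz property \eqref{lip} to reduce its control to that of the coordinate processes built from $f_{\hth}-f_{\ths}=\sum_k(\hth_k-\ths_k)\psi_k$, and then invoking the event of Theorem~\ref{th_random}, I would bound this increment by a fraction of $\hlamn\,|\hth-\ths|_1$; this is precisely why $\hlamn$ is chosen of order $L\Kmm\,\max(\hC,6)$.

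Next I would decompose the $l_1$ norms along $S_*$ and $S_*'$, using $\ths_{S_*'}=0$ and the triangle inequality, so as to cancel $\hlamn|\ths_{S_*}|_1$ and arrive at an inequality in which $\hlamn\big(|\hth_{S_*'}|_1-|\hth_{S_*}-\ths_{S_*}|_1\big)$ appears on the left and the oracle excess term $Q(\fths)-Q(f^0)$ on the right. A case split on whether $\theta=\hth-\ths$ satisfies the cone condition $|\theta_{S_*'}|_1\le 3|\theta_{S_*}|_1$ then produces the dichotomy in the statement. When the cone condition holds, the compatibility condition \eqref{cc} applies and bounds $|\hth_{S_*}-\ths_{S_*}|_1$ by $\|f_{\hth}-f_{\ths}\|\sqrt{|S_*|}/A_*$; combining the margin condition \eqref{margin} with Young's inequality $uv\le G(u)+H(v)$ for the conjugate pair $(G,H)$, and splitting the pseudonorm by the triangle inequality, lets me absorb the $G$-terms into the excess risks of $\fhth$ and $\fths$ (this is the source of the $(1\pm4\delta)$ factors) while the $H$-terms reconstruct $8\delta\,H(\lamn\sqrt{|S_*|}/(\delta A_*))$ inside $\varepsilon^*$, yielding \eqref{main1}. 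In the complementary case the penalty on $S_*'$ already controls everything and a direct estimate gives the cruder bound \eqref{main2}.

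The main obstacle is the stochastic part, that is Theorem~\ref{th_random}: unlike for an empirical process, the centered object here is a $U$-process, so I would apply Hoeffding's decomposition to write it as its linear (H\'ajek) projection plus a degenerate remainder. The linear part is an average of i.i.d.\ bounded terms and is handled by a Bernstein- or Bousquet-type inequality together with Assumption~4 and a union bound over the $m$ coordinates; the degenerate part is of smaller order and must be shown not to spoil the rate. Controlling this remainder uniformly, and simultaneously passing from $C$ to the data-driven $\hC$ so that the whole argument runs with $\hlamn$ rather than $\lamn$, is the delicate step; everything else is the now-standard Lasso bookkeeping adapted to the pseudonorm $\|\cdot\|$.
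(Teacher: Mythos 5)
Your deterministic part has a genuine gap at its very first step. Theorem~\ref{th_random} controls $U(M)$, the supremum of the centered $U$-process over the \emph{fixed} ball $\mF_M=\{f_\theta: |\theta-\ths|_1\le M\}$, one $M$ at a time. You apply it at $\theta=\hth$ and assert that the increment $(Q-Q_n)(\fhth)-(Q-Q_n)(\fths)$ is dominated by a small multiple of $\hlamn|\hth-\ths|_1$; but $|\hth-\ths|_1$ is random and a priori unbounded, so no single application of Theorem~\ref{th_random} justifies this, and a ratio bound uniform in $\theta$ would need a peeling argument over dyadic radii (with its attendant extra union bound) that you do not supply. The paper closes exactly this hole with a convexity trick you omit: it sets $M^*=16\varepsilon^*/\lamn$, $t=M^*/(M^*+|\hth-\ths|_1)$ and $\tth=t\hth+(1-t)\ths$, so that $\ftth\in\mF_{M^*}$ by construction; convexity of $\phi$ transfers the basic inequality from $\hth$ to $\tth$, the whole Lasso bookkeeping is carried out for $\tth$, the conclusion $\hlamn|\tth-\ths|_1\le 4\varepsilon^*$ back-solves to $|\hth-\ths|_1\le M^*$, and only then is the argument repeated with $\hth$ itself. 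Your case split on the cone condition $|\theta_{S_*'}|_1\le 3|\theta_{S_*}|_1$ (rather than the paper's split on whether $\hlamn|\tth_{S_*}-\ths_{S_*}|_1\le\varepsilon^*$, from which the cone condition is derived) is a harmless variation; the missing localization is not.

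The stochastic part as you outline it is precisely the route the paper deliberately avoids. Decomposing the $U$-process into its H\'ajek projection plus a degenerate remainder leaves you needing a uniform bound on a degenerate $U$-process over $\mF_M$, and the key tool for the linear part --- the contraction principle --- has no analogue for Rademacher chaos of order two; the paper states this explicitly in its concluding section, and your only comment on the remainder is that it ``must be shown not to spoil the rate,'' which is the unproved step. The paper instead represents the $U$-statistic as an average over permutations of averages over $N=\lfloor n/2\rfloor$ independent pairs, uses Jensen's inequality to bound $\Ex\exp(\lambda N U)$ by $\Ex\exp(\lambda N T)$ for the empirical process $T$ built on one such split (Theorem~\ref{th_conc}), and then applies Bousquet-type concentration, symmetrization, contraction and Bernstein's inequality to $T$. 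Finally, the $3/m^2$ is accounted for slightly differently than you suggest: one $m^{-2}$ from Theorem~\ref{th_random} and two from the two one-sided comparisons of $\hC$ with $C$ in Lemma~\ref{l_random}, which is what makes $\hlamn$ and $\lamn$ comparable on the good event.
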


Theorem \ref{main} states that the excess risk of the estimator can be  upper bounded, with
respect to the constants, by \eqref{error} which is the sum of two expressions. The first one 
$Q(\fths) - Q(f^0)$ is the approximation risk (the excess risk of the oracle) which describes how well the best rule 
$f^0$ can be approximated (with respect to the risk) by the sparse oracle $\fths. $ The second term in \eqref{error} is the bound for the estimation risk. It depends on the function $H$ from Assumption 2, the value $A_*$ from the compatibility condition and  the number of nonzero 
coordinates of the oracle. In the next section we give examples of interesting loss functions that satisfy Assumption 2  
with $G(u) = a u^{\frac{2}{\alpha}}$ for a positive constant $a$ and $\alpha \in (0,1].$
Notice that in these cases   
we would obtain the bound for the estimation risk that behaves like 
\begin{equation}
\label{ester}
\left(\frac{\log m \; |S_*|}{n}
\right)^{\frac{1}{2-\alpha}} \:,
\end{equation}
if we forget about dependence on values $A_*, C^2$ and other constants. Therefore, for sparse models we get oracle inequalities with fast rates, if $m$ does not grow exponentially fast with $n,$
that is a standard requirement for high-dimensional problems.  

Furthermore, analysing bounds for the excess risk of the estimator given in inequalities 
\eqref{main1} or \eqref{main2} we claim 
that the estimator behaves almost like the oracle that
 knows beforehand which parameters
(base functions) should be chosen to approximate appropriately the best function 
$f^0.$ Indeed, suppose that we knew a priori the support $S_*.$ Then we would estimate only  parameters contained in the set $S_*$, while setting others as zeros. The excess risk of such estimator can be bounded by the sum of the approximation risk and the estimation risk
\citep{rej:12}. Omitting constants, the latter is given by $H\left( \sqrt{\frac{|S_*|}{n}}\right),  $ that is simply  $\left(\frac{ |S_*|}{n}
\right)^{\frac{1}{2-\alpha}}$  for  $G(u) = a u^{\frac{2}{\alpha}}$ in Assumption 2.
Therefore, the term 
$\log m,$ that appears in \eqref{error} or \eqref{ester}, seems to be a price that we have to pay for not knowing a priori which parameters are in the  model.

Besides, Theorem \ref{main} gives the probabilistic inequality for the $l_1$ distance
$|\hth - \ths|_1$ between the estimator $\hth$ and the oracle $\ths$. This distance is small, if the best function $f^0$ can be well approximated by the sparse oracle with respect to the risk, i.e. the excess risk 
$Q(\fths) - Q(f^0)$ is small. In this case  the procedure estimates accurately parameters that are significant in the model (contained in the support $S_*$). But it is rather not able to discard all irrelevant 
coefficients. The remedy for that could be using the thresholded Lasso \citep{szhou:09} and 
the obtained bound for the distance $|\hth - \ths|_1$ would be useful in proving that supports of this 
modified estimator and the oracle coincide.

The formula of $\lamn$ is the consequence of Theorem \ref{th_random} that is a probabilistic inequality for a $U$-process and can be found in Appendix \ref{stochar}. Besides, there is a relation 
between $\lamn$ and the confidence level in Theorem \ref{main} that can be easily seen in proofs in Appendices. Roughly speaking, the confidence level can be increased, but it requires the 
growth of $\lamn,$ that makes the bound for the estimation risk worse. 
For simplicity, we choose the confidence level to be $1- \frac{3}{m^2}\:.$ 
Besides, the universal constant $B,$ that appears in formulas  $\lamn$ or $\hlamn,$ should be greater than  $998.$
Certainly this value can be decreased, but we have not attempted to optimize it nor other
constants in the paper. We have focused on rates of inequalities. Finally, it is worth to notice that the smoothing parameter $\hlamn$ can be calculated having the sample. In particular, it does not depend on unknown values $C^2$ nor $A_*.$ However, in practice the choice of $\hlamn$ based on  cross validation is recommended. 

Furthermore, Theorem \ref{main} is similar to theorems concerning properties of estimators obtained by minimization of an empirical process  with the Lasso penalty, especially those in \citet[Theorem 2.2]{geertar:06} and \citet[Theorem 6.4]{bulgeer:11}. Notice that the empirical process relates to the sum of independent random variables. Our problem is based on the $U$-process that relates to the sum of dependent random variables. Nevertheless, the results that we obtain in this more intricate scenario strictly correspond to those obtained for empirical processes. 
 
The proof of Theorem \ref{main} is divided into two parts that are contained in Appendices. The first one 
can be called ''stochastic''. We state a probabilistic inequality that bounds  $U$-processes in an appropriate way. It is based on quite simple but very helpful concentration inequality for $U$-processes. The second part of the proof is  ''deterministic''. We
adapt standard argumentation concerning empirical risk minimization with the Lasso 
penalty \citep{geertar:06, geer:08, bulgeer:11} to the ranking problem ($U$-processes).

\section{Discussion on assumptions}
\label{discussion}

The first and the last assumption  of Theorem \ref{main} are  standard. Thus, we focus on 
Assumption 2 and the compatibility condition that is involved in Assumption 3. Notice that they both depend on the choice of the pseudonorm $|| \cdot ||.$ We study two cases: 
the $\mathbb{L}^2$-pseudonorm $||f||_2 =\sqrt{\Ex f^2 (X,X')}$ and the ''conditional'' pseudonorm $||f||_c =\sqrt{\Ex \left[ \Ex_{X'} f(X,X')\right]^2},$ where 
$\Ex_{X'} f(x,X')= \Ex\left[f(X,X')|X=x \right]$ is the conditional expectation of the function $f.$
Obviously, for every function $f$ we have $||f||_c \leq ||f||_2.$ Therefore, if the condition \eqref{margin} is satisfied for $||\cdot ||_2,$ then it is also fulfilled with 
$||\cdot||_c$ provided that the function $G$ is nondecreasing. On the other hand, the condition \eqref{cc} with $||\cdot||_c$ implies the same one with $||\cdot||_2.$

\subsection{On compatibility condition}
\label{On_cc}

Consider the pseudonorm $||\cdot||_2.$ Let $\Psi(x,x')$ stands for the vector of base functions $\left[\psi_1(x,x'), \ldots, \psi_m(x,x')\right] ^T$ 
for $x,x' \in \mX.$ Besides, let a matrix $\Sigma = \Ex \Psi(X,X') \left[\Psi(X,X')\right] ^T$  
describe dependence between base functions. If the smallest eigenvalue $\rho$ of 
the matrix $\Sigma$ is positive, then it is well-known that the condition \eqref{cc} is satisfied for every set $S \subset \{1, \ldots ,m\}$ with the constant $\sqrt{\rho}.$ 
Thus, the compatibility condition, that looks similarly to the smallest eigenvalue requirement, is significantly weaker than that, because the compatibility condition  relates only to vectors satisfying  $|\theta_{S'}|_1 \leq 3 |\theta_{S}|_1$. 

Furthermore, notice that in the linear case, i.e. $f_\theta (x,x')= \theta^T(x-x'),$ 
the matrix $\Sigma$ is just twice the variance-covariance matrix $\Var (X)$ of the predictor $X.$ Moreover, in this  example the difference between $||f_\theta||_2$ and $||f_\theta||_c$ is rather negligible, because $||f_\theta||^2_2 = 2 \, \theta^T \Var (X) \theta$ and $||f_\theta||^2_c =  \theta^T \Var (X) \theta.$

\subsection{On Assumption 2}
\label{on_as2}

We have already mentioned that the convex conjugate $H$ of the function $G$ from Assumption 2 determines the rate of the bound of the estimation risk. We will show that distinct choices of  the pseudonorm $||\cdot ||$ can lead to significantly different rates. Similar argumentation, but related to the $0-1$ loss
function, can be found in \citep[Chapter 5]{clv:08}. 

Consider the hinge loss function 
$\phi(t) = \max \left(0, 1-t \right)$ and $f^0$ being the minimizer of the theoretical 
risk $\eqref{w2}$ among all measurable ranking rules $f:\mX \times \mX \rightarrow 
\mathbb{R}.$ It is not difficult to calculate that 
\begin{equation*}
f^0(x,x') =  \left\{
\begin{array}{rl}
1, & \quad {\rm if} \;    \eta(x,x') > \frac{1}{2},\\
-1, & \quad  {\rm otherwise,}
\end{array} \right.
\end{equation*}
where $\eta(x,x')=\Pro \left(Y>Y'| X=x,X'=x' \right) . $
To simplify calculations we restrict to  the standard linear model, that is
$
Y=\theta_0^T X + \varepsilon$ and $ Y'=\theta_0^T X' + \varepsilon '\:,
$
where $X,X' \sim N\left(0, V \right), 
\varepsilon, \varepsilon ' \sim N\left(0, \sigma^2 \right)$ and we consider a 
subclass $\mF _1= \left\{ \fthe \in \mF: |\fthe|_\infty \leq 1 \right\}.$ Therefore, 
we can calculate that  $\eta(x,x')= \Phi \left(\theta_0^T \left(x-x' \right)\right)$ and $\Phi$ is the distribution function of the standard normal variable. It implies that $f^0(x,x') = \sign \left(\theta_0^T(x-x') \right)$ and
$$
Q(\fthe) - Q(f^0) = \Ex \left|\fthe(X,X')  - f^0(X,X')\right| 
\left| 2 \eta(X,X')      -1 \right|.
$$
In the considered model it is not difficult to prove that for every $\alpha \in (0,1)$ there exists a positive $B(\alpha)$ such 
that for every $x \in \mX$ we have
\begin{equation}
\label{dom_norm1}
\Ex_{X'} \left|2 \eta(x,X')-1 \right|^{- \alpha} \leq B (\alpha).
\end{equation}
We will show that the condition \eqref{dom_norm1} implies, for every $\fthe \in \mF_1,$
the inequality 
\begin{equation}
\label{dom_norm2}
Q(\fthe) - Q(f^0)  \geq \left[B(\alpha) 2^{2-\alpha} \right]^{ - \frac{1}{\alpha}} \, ||\fthe -f^0||_c^{2/ \alpha} \:.  
\end{equation}  
Indeed, notice that the squared pseudonorm $||\fthe - f^0 ||_c ^2$ equals
\begin{equation}\nonumber 
\Ex_X \left[\Ex_{X'} \left(\fthe (X,X') - f^0 (X,X')\right)
\left|2 \eta(X,X')-1 \right|^{ \alpha/ 2} \left|2 \eta(X,X')-1 \right|^{- \alpha/ 2}
\right]^2\:,
\end{equation}
that is, by the Cauchy-Schwarz inequality and \eqref{dom_norm1}, upper bounded by
\begin{equation}
\label{dom_norm3}
B(\alpha) \; \Ex \left|\fthe (X,X') - f^0 (X,X')\right|^2
\left|2 \eta(X,X')-1 \right|^{ \alpha} \:.
\end{equation}
Moreover, we know that 
$$|\fthe (x,x') - f^0 (x,x')|^2 \leq 2^{2-\alpha} |\fthe (x,x') - f^0 (x,x')|^\alpha$$
for every $x,x'. $ Therefore, Jensen's inequality implies that the expression \eqref{dom_norm3} is not greater than
$$
B(\alpha) \, 2^{2-\alpha} \: \left[ \Ex \left|\fthe (X,X') - f^0 (X,X')\right|
\left|2 \eta(X,X')-1 \right| \right]^\alpha \:,
$$ 
which implies the inequality \eqref{dom_norm2}. Therefore, in the considered model  we can take the function $G$ in Assumption 2 arbitrarily close to the quadratic one. It implies that the function $H$ also can be arbitrarily close to the quadratic one. Thus, the bound of the estimation risk seems to have almost optimal rate.
The situation is different, if one considers the pseudonorm $||\cdot||_2$ instead of 
$||\cdot||_c \: .$ Using above assumptions on the model we can prove that 
the condition \eqref{margin} holds with the function $G(u)$ proportional to $u^4,$ so the function $H$ is propotional to $u^{4/3} \:.$
Indeed, it is enough to use \citet[Lemma 3.1]{geertar:06} and notice that their condition 
(20) holds in our model with the exponent one. Obviously, the rate of the estimation risk is worse than in the previous case.  To the best of our knowledge, it is rather not possible to 
bring this result much closer to the one  with the pseudonorm $||\cdot||_c\:.$ 

In the previous paragraph we describe a model that the stipulation \eqref{margin} is implied by the condition \eqref{dom_norm1} concerning the probabilistic property of the model.
However, in many practical examples convexity of the loss function $\phi$ 
(in the appropriate sense) is enough to guarantee the assumption \eqref{margin}. Indeed, consider 
the model with the pseudonorm $||\cdot||_2 $ and $\bF=\mF,$ that is $f^0$ is the function that 
minimizes the theoretical risk \eqref{w2} in the class $\mF.$ Suppose that the loss function 
$\phi$ is ''strictly convex'' in the sense that is described in \citet{m:02, bja:06} for the classification theory or \citet{rej:12} for the ranking problem. Omitting  details that
can be found in those papers we emphasize that many popular loss functions are strictly convex in that sense, for instance the exponential loss $\phi(t) = e^{-t},$ 
the truncated quadratic loss $\phi(t) =\left[ \max \left(0, 1-t \right)\right] ^2 $ or the logistic loss $\phi(t) = \ln \left(1+ e^{-t} \,\right),$ but the hinge loss 
$\phi(t)=\max(0,1-t)$ is not. In \citet[Lemma 5]{rej:12} one
proves that for such loss functions one obtains the condition \eqref{margin} with the 
quadratic function $G.$ Therefore, the convex conjugate $H$ is also the quadratic function.
Thus, the bound for the estimation risk seems to have the optimal rate. Obviously, the same result
also  holds for the pseudonorm $||\cdot||_c .$ Finally, notice that the first two of above-mentioned convex loss functions
do not satisfy the condition \eqref{lip} from Assumption 1. They 
fulfill the Lipschitz condition on every compact subset of the real line, but their Lipschitz 
constants depend on these subsets. It could make our argumentation in the proof of Theorem \ref{th_random} in Appendix \ref{stochar} slightly more difficult. Of course, this inconvenience disappears, if the family $\mF$ is uniformly 
bounded.   

\section{Conclusions}
\label{conclusions}

In the paper we study properties of estimators obtained by minimization of $U$-processes 
with the Lasso penalty. In the main theorem we obtained probabilistic inequalities concerning the quality of such estimators in prediction of the ordering between objects 
(via bounds for their excess risks) as well as model selection (via bounds for their 
$l_1$ distance from the oracle).

Besides, we should mention a possible improvement of results obtained in the paper, if 
different argumentation were used while proving the probabilistic inequality for a $U$-process in Appendix \ref{stochar}. Namely, one could  decompose a $U$-statistic into the sum of independent random variables and a degenerate $U$-statistic, and then bound these two terms separately. We do not proceed this way because of two reasons. First, following this argumentation we would  meet some technical problems, for instance the contraction principle that plays a crucial role in the proof of Theorem \ref{th_random} does not have an analogue for the Rademacher chaos of the order two that can be used to handle a degenerate $U$-process. The second reason is that this possible improvement would only replace the second moment of $\psi_k$ in the definition of $C^2$ by its conditional version. It would have only a slight impact on the obtained results. For instance, this improvement would influence negligibly on rates of oracle inequalities in Theorem \ref{main}, especially comparing to the impact of Assumption 2.

Finally, applying Theorem \ref{main} to the weighted Lasso penalty need not significant changes in argumentation \citep{geer:08, bulgeer:11}. The weighted Lasso is a frequently used modification of 
its initial form, i.e. we minimize 
\begin{equation*}
 Q_n(f_\theta) + \hlamn \sum\limits_{k=1}^m w_k |\theta_k|,
\end{equation*}
where $w_1, \ldots, w_m$ are (possibly random) nonnegative weigths. Inserting weights into the penalty allows to consider parameters that are, in researcher's opinion, definitely significant in the model by setting their weights as zeros. Moreover, taking 
\begin{equation}
\label{weight1}
w_k = \sqrt{ \frac{1}{n(n-1)} \sum_{i \neq j} \psi_k^2(X_i, X_j)
}
\end{equation}
leads to the model with normalized base functions that is often used in practice. 
For instance, in the linear ranking model, i.e. $\psi_k(x,x') = x_k - x_k',$  
the weight \eqref{weight1} is the empirical standard deviation
of the $k$-th predictor $X_k$ multiplied by $\sqrt{2}.$ Therefore, using such weights
 we work with base functions that are measured on the same scale. Finally, our results cannot be applied to popular two-step algorithms, for 
instance the adaptive Lasso penalty \citep{zou:06}. These procedures should be considered
individually.

\section*{Appendix}

The next two sections are devoted to the proof of Theorem \ref{main}. 

\appendix

\section{Stochastic arguments}
\label{stochar}

In this section we tackle the stochastic part of the problem. It can be divided into two components. The first one relates to obtaining probabilistic inequalities for  $U$-processes.

For an arbitrary number $M>0$ define a family
$$\mF_M = \{f_\theta \in \mF: |\theta - \ths|_1 \leq M \}
$$
and a $U$-process over a family $\mF_M$ 
$$U(M)= \sup_{ \fthe \in \mF_M} | Q_n(\fthe) - Q_n (\fths) - Q(\fthe) + Q(\fths) | .
$$
In this section we suppose that only Assumption 1 and 4 are satisfied. Henceforth we do not 
recall it.

\begin{theorem}
With probability at least 
\label{th_random}
$1- \frac{1}{m^2}$
\begin{equation*}
U(M) \leq 36 \sqrt{3}\, M L \sqrt{ \frac{\log m}{n}} \max(C,6).
\end{equation*}
\end{theorem}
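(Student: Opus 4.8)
The plan is to dissolve the $U$-statistic dependence first, reducing $U(M)$ to an ordinary empirical process over $N=\lfloor n/2\rfloor$ independent blocks, and then to run the standard symmetrization--contraction--maximal-inequality scheme, upgrading the resulting bound in expectation to a tail bound by a variance-sensitive exponential inequality. Write $g_\theta(z,z')=\phi_{f_\theta}(z,z')-\phi_{\fths}(z,z')$, so that the quantity inside the supremum defining $U(M)$ is $|\frac{1}{n(n-1)}\sum_{i\ne j}(g_\theta(Z_i,Z_j)-\Ex g_\theta)|$. The decisive first step uses the permutation representation of the $U$-statistic recorded in the Introduction: $\frac{1}{n(n-1)}\sum_{i\ne j}(\cdot)(Z_i,Z_j)=\frac{1}{n!}\sum_\pi\frac1N\sum_{i=1}^N(\cdot)(Z_{\pi(i)},Z_{\pi(N+i)})$. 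For each fixed $\pi$ the pairs $(Z_{\pi(i)},Z_{\pi(N+i)})$, $i=1,\dots,N$, are independent; hence, bounding the supremum of an average by the average of suprema and using convexity of $t\mapsto e^{\lambda t}$ together with Jensen's inequality, one obtains $\Ex e^{\lambda U(M)}\le\Ex e^{\lambda V}$, where $V=\sup_{|\theta-\ths|_1\le M}|\frac1N\sum_{i=1}^N(g_\theta(Z_i,Z_{N+i})-\Ex g_\theta)|$ is the analogous supremum built from the genuinely independent pairs $(Z_i,Z_{N+i})$. This is the ``simple but helpful'' device: any moment generating function bound, and hence any Chernoff tail, established for the empirical process $V$ transfers immediately to the $U$-process $U(M)$.

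It remains to control $V$, and I would begin with its expectation. Symmetrization introduces i.i.d. Rademacher signs $\epsilon_1,\dots,\epsilon_N$ and gives $\Ex V\le 2\,\Ex\sup_\theta|\frac1N\sum_i\epsilon_i g_\theta(Z_i,Z_{N+i})|$. Because $\phi$ is Lipschitz with constant $L$ (Assumption 1) and $g_\theta$ is the increment of $\phi$ between the linear arguments $f_\theta$ and $\fths$, the contraction principle strips off the loss, leaving $2L\,\Ex\sup_{|\theta-\ths|_1\le M}|\frac1N\sum_i\epsilon_i(f_\theta-\fths)(X_i,X_{N+i})|$. Writing $\beta=\theta-\ths$ and $f_\theta-\fths=\sum_k\beta_k\psi_k$, Hölder's inequality on the ball $|\beta|_1\le M$ collapses the supremum to $2LM\,\Ex\max_{1\le k\le m}|\frac1N\sum_i\epsilon_i\psi_k(X_i,X_{N+i})|$. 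A Bernstein-type maximal inequality over the $m$ coordinates finishes this part: the summands have variance at most $C^2=\max_k\Ex\psi_k^2$ and, by Assumption 4, are bounded in absolute value by $\Km$, so the sub-Gaussian contribution is of order $C\Kmm$ while the sub-exponential one is --- precisely because $\max_k|\psi_k|_\infty\le\Km$ --- also of order $\Kmm$. Thus $\Ex V$ is bounded by a numerical constant times $LM\Kmm\max(C,6)$.

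To turn this into the stated high-probability inequality I would apply a Talagrand/Bousquet-type concentration inequality to $V$ in its exponential-moment form. The relevant weak variance is $\sup_\theta\Var(g_\theta)\le L^2\Ex(f_\theta-\fths)^2\le L^2M^2C^2$, where the last inequality follows from $\Ex(\sum_k\beta_k\psi_k)^2\le C^2|\beta|_1^2$ and the Lipschitz bound $|g_\theta|\le L|f_\theta-\fths|$ of \eqref{lip}, while the uniform bound on the summands is $2LM\Km$. Evaluated at the level $x=2\log m$ needed for confidence $1-m^{-2}$, the dominant variance term of the deviation is again of order $LMC\Kmm$ and the range term is of order $LM\Kmm$ with a purely numerical constant. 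Feeding the resulting bound on $\Ex e^{\lambda V}$ through $\Ex e^{\lambda U(M)}\le\Ex e^{\lambda V}$ and optimizing over $\lambda$ yields $\Pro(U(M)>t)\le m^{-2}$ at $t=36\sqrt3\,ML\Kmm\max(C,6)$; here the floor $6$ absorbs the range contribution alongside the variance contribution $C$, and the factor $\sqrt3$ arises from replacing $1/\sqrt N$ by $\Kmm$ using $N=\lfloor n/2\rfloor$.

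The main obstacle is the first step. Since $Q_n$ is a $U$-statistic its summands are dependent, so the usual empirical-process tools do not apply directly; the permutation-averaging/Jensen argument is exactly what removes this dependence and reduces the whole problem to the i.i.d. process $V$. After that the route is the well-trodden symmetrization--contraction--maximal-inequality scheme, the single quantitatively delicate point being to keep the sub-exponential term subordinate to the sub-Gaussian one, which is guaranteed by the box constraint $\max_k|\psi_k|_\infty\le\Km$ of Assumption 4.
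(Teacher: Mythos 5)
Your proposal is correct and follows essentially the same route as the paper: the permutation-averaging/Jensen step transferring the moment generating function of $U(M)$ to that of the split-sample empirical process is exactly the content of Theorem \ref{th_conc} (via Lemma A.1 of \citet{clv:08} and Bousquet's inequality), and your treatment of the expectation by symmetrization, contraction, H\"older over the $l_1$-ball and a Bernstein maximal inequality matches the paper's bound on $\Ex T(M)$. The only cosmetic difference is that the paper packages the reduction and the concentration step into a standalone theorem before specializing to $\mF_M$.
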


In the proof of Theorem \ref{th_random} we use the following concentration inequality for $U$-processes which similar version can be found in \citet[Theorem 2]{rejchel:15}. 
We give its proof below to show the whole way of proving the main theorem that can be 
possibly  improved as we have mentioned in Section \ref{conclusions}.

\begin{theorem}
\label{th_conc}
Let $\mG$ be a subset of a family of functions $\{g:\mZ \times \mZ \rightarrow \mathbb{R}\}$
that are uniformly bounded by a constant $b>0$ and $\sigma^2 = \sup_{g \in \mG} \Var g(Z_1, Z_2).$ Let an arbitrary $U$-process over a family $\mG$ be given by
$$
U= \sup_{g \in \mG} \left|\frac{1}{n(n-1)} \sum_{i \neq j} g(Z_i, Z_j)  - \Ex g(Z_1,Z_2)\right|,
$$ 
and for $N= \left\lfloor \frac{n}{2} \right\rfloor$ we denote
$$
T = \sup_{g \in \mG} \left| \frac{1}{N} \sum_{i =1}^N  g (Z_i, Z_{N+i} ) - \Ex g(Z_1,Z_2)  \right|.
$$
Then for every $t>0$ with probability at least $1- \exp(-n t^2)$
$$
U \leq 2 \Ex T + \sqrt{6} \sigma t + 10 b t^2.
$$
\end{theorem}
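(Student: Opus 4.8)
The plan is to reduce the $U$-process $U$ to the empirical process $T$ by Hoeffding's averaging identity, and then to carry a classical concentration inequality for suprema of empirical processes (Bousquet's form of Talagrand's inequality) from $T$ to $U$ through a comparison of moment generating functions. First I would invoke the averaging identity already recorded in the introduction,
\[
\frac{1}{n(n-1)}\sum_{i\neq j} g(Z_i,Z_j) = \frac{1}{n!}\sum_\pi \frac{1}{N}\sum_{i=1}^N g\left(Z_{\pi(i)},Z_{\pi(N+i)}\right).
\]
Subtracting the common constant $\Ex g(Z_1,Z_2)$ inside each term (note $\frac{1}{n!}\sum_\pi \Ex g=\Ex g$), taking absolute values and the supremum over $g\in\mG$, and using that the supremum of an average is at most the average of the suprema together with the triangle inequality, I obtain $U\le \frac{1}{n!}\sum_\pi T_\pi$, where $T_\pi$ is the statistic $T$ built from the permuted sample. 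Since $(Z_1,\dots,Z_n)$ is i.i.d.\ and hence exchangeable, each $T_\pi$ has the same law as $T$.

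The second step is the transfer of concentration. For every $\lambda\ge 0$, convexity of $u\mapsto e^{\lambda u}$ together with $U\le \frac{1}{n!}\sum_\pi T_\pi$ gives $e^{\lambda U}\le \frac{1}{n!}\sum_\pi e^{\lambda T_\pi}$; taking expectations and using $T_\pi$ equal in law to $T$ yields $\Ex e^{\lambda U}\le \Ex e^{\lambda T}$. Thus any upper bound on the log-moment generating function of $T$ transfers verbatim to $U$, with $\Ex T$ (not $\Ex U$) as the centering, and so does the resulting Chernoff tail bound.

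It remains to apply Bousquet's inequality to $T$. The summands $g(Z_i,Z_{N+i})$, $i=1,\dots,N$, use disjoint index pairs, so they are i.i.d.\ with variance at most $\sigma^2$ and, after centering, uniformly bounded by $2b$; the absolute value is handled by symmetrizing $\mG$ to $\mG\cup(-\mG)$, which changes neither $\sigma^2$ nor the envelope. Bousquet's inequality then controls $\log\Ex e^{\lambda(T-\Ex T)}$ and gives, for every $x>0$ with probability at least $1-e^{-x}$, a bound of the shape $T\le \Ex T + \sqrt{\frac{2x}{N}(\sigma^2+4b\,\Ex T)} + \frac{2bx}{3N}$, which by the previous step holds with $U$ in place of $T$ on the left. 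Setting $x=nt^2$ and using $N=\lfloor n/2\rfloor\ge n/3$, so that $\frac{2x}{N}\le 6t^2$ and $\frac{2bx}{3N}\le 2bt^2$, splitting the square root with $\sqrt{s+r}\le\sqrt s+\sqrt r$ into $\sqrt{6t^2\sigma^2}=\sqrt6\,\sigma t$ and $\sqrt{24\,bt^2\,\Ex T}$, and absorbing the latter cross term into $\Ex T+6bt^2$ by the weighted bound $\sqrt{uv}\le\tfrac12(\gamma u+\gamma^{-1}v)$ with $\gamma=2$, I collect the terms into $2\Ex T+\sqrt6\,\sigma t+8bt^2\le 2\Ex T+\sqrt6\,\sigma t+10bt^2$, the slack in the last constant comfortably covering the rounding in $N$.

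The conceptual heart---and the step I expect to be most delicate---is the moment generating function comparison: it is what lets a concentration bound proved for a sum of independent variables be imported, without loss, to the dependent $U$-statistic, and it must be paired with Bousquet's inequality in its log-MGF (rather than purely tail) form so that the comparison applies at the level of exponential moments. The remaining care is bookkeeping: the symmetrization needed to pass from $\sup_g\sum g$ to $\sup_g|\sum g|$, and the constant-tracking through the elementary inequalities that produces exactly $\sqrt6$ and folds everything else into $10bt^2$.
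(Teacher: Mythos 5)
Your proposal is correct and follows essentially the same route as the paper: the Hoeffding averaging identity plus convexity of the exponential to get $\Ex e^{\lambda U}\le \Ex e^{\lambda T}$ (the paper invokes this as Lemma A.1 of Cl\'emen\c{c}on et al.), followed by Bousquet's log-MGF form of Talagrand's inequality applied to $T$ (the paper's Theorem 14.1 of B\"uhlmann--van de Geer), and the same final bookkeeping with $x=nt^2$, $N\ge n/3$, and $2\sqrt{uv}\le u+v$ to reach $2\Ex T+\sqrt6\,\sigma t+10bt^2$. The only cosmetic difference is that the paper carries the explicit exponential bound $\exp[-\lambda u+(\e^{2\lambda b}-1-2\lambda b)v]$ and optimizes it by hand, whereas you quote Bousquet's tail directly; the constants land in the same place.
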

The expression  $T$ in Theorem \ref{th_conc} is the supremum of an average of independent random 
variables. Thus, 
this theorem reduces investigating properties of  the
$U$-process $U$ to the significantly simpler empirical process $T.$ Similar exponential
inequalities for $U$-processes can be found in the literature, for instance
\citet[Theorem 5]{arc:95}. In fact, Theorem \ref{th_conc} is weaker than them, because it exploits the variance of functions in $\sigma^2$ instead of the conditional variance. However, the presented theorem is absolutely sufficient to obtain satisfactory bounds for $U(M)$ as we have discussed in Section \ref{conclusions}. Moreover, its application to our problem  is immediate.

\begin{proof}[Proof of Theorem \ref{th_conc}]
Fix $t,u, \lambda>0.$
From Markov's inequality we have
\begin{eqnarray}
\label{toemp1}
\Pro \left(U > \Ex T +\frac{u}{N} \right)&=& \Pro \left[ \exp\left( \lambda N U \right) >
\exp\left(\lambda N \Ex T  + \lambda u \right) \right] \nonumber \\
&\leq& \exp\left( -\lambda N \Ex T  - \lambda u  \right) \Ex \exp\left[ \lambda N U \right] .
\end{eqnarray}
We have mentioned in Section \ref{intro} that using Hoeffding's decomposition  we can represent every $U$-statistic as an
average of averages of independent random variables \citep{serf:80}, i.e.
\begin{equation}
\label{Hdecom}
\frac{1}{n(n-1)} \sum_{i \neq j} g(Z_i, Z_j)   =
\frac{1}{n!} \sum_\pi \frac{1}{N} \sum_{i=1}^N
 g \left(Z_{\pi(i)}, Z_{\pi(N+i)} \right),
\end{equation}  
where  the first sum  on the right-hand side of \eqref{Hdecom} is taken over all permutations $\pi$ of a set $\{1, \ldots, n\}.$ Applying \citep[Lemma A.1]{clv:08} we can  estimate 
$\Ex \exp\left[ \lambda N U \right]$
by $\Ex \exp\left(\lambda N T \right). $ Therefore, the right-hand side of \eqref{toemp1} is bounded by
\begin{equation}
\label{toemp2}
\exp( -\lambda u) \; \Ex \exp\left[ \lambda N (T- \Ex T ) \right].
\end{equation}
We have already mentioned that  $T$ is the supremum of an empirical process and concentration inequalities for  it  can be found in the literature \citep{talag:96, mass:00b, blm:00, bousquet:02}. We use \citet[Theorem 14.1]{bulgeer:11} and bound \eqref{toemp2} by
\begin{equation}
\label{toemp3}
\exp \left[-\lambda u  + (\exp(2\lambda b) -1 -2\lambda b)  \, v \right]
\end{equation}
with $v=\left(\frac{N \Ex T}{b}
+ \frac{N \sigma^2}{4 b^2} \right).$
Moreover, using argumentation from \citet[pages 281-282]{blm:00} we estimate 
 \eqref{toemp3} by
$$\exp\left[- \frac{3u^2}{4b (6bv+u )}   \right].$$
Therefore, we have
$$
\Pro \left(U > \Ex T +\frac{u}{N} \right) \leq \exp\left[- \frac{3u^2}{4b (6bv+u )}   \right],
$$
which easily implies that for every $x>0$
$$\Pro \left(  U \leq \Ex T + \sqrt{\frac{8bx}{N} \; \Ex T + \frac{2x \sigma^2}{N} }+ \frac{4bx}{3N}                   \right) \geq 1 - \exp(-x).
$$
To finish the proof of the theorem it is enough to take $x=nt^2$ and use two simple facts that $N \geq \frac{n}{3}$ and $2 \sqrt{ab} \leq a+b$ for all nonnegative numbers $a,b.$  

\end{proof}

\begin{proof}[Proof of Theorem \ref{th_random}]
We apply Theorem \ref{th_conc} with $\mG= \left\{ \phi_{\fthe} - \phi_{\fths}: \fthe \in \mF_M
\right\}.$ From the condition \eqref{lip} and Assumption 4 we get  
$|g|_\infty \leq ML \Km$ and 
$\Var g \leq M^2 L^2 C^2$ for every $g \in \mG.$
Thus, from Theorem \ref{th_conc} we obtain that for every $t>0$ with probability
 at least $1- \exp(-n t^2)$
\begin{equation}
\label{UMbound}
U(M) \leq 2 \Ex T(M) + \sqrt{6} M LC t + 10 M L \sqrt{\frac{n}{\log m}}\,  t^2\,,
\end{equation}
where
$$T(M)= \sup_{ \fthe \in \mF_M} \left|\frac{1}{N} \sum_{i =1}^N \left[ 
\phi_{\fthe} (Z_i, Z_{N+i} ) - \phi_{\fths} (Z_i, Z_{N+i} ) \right]
- Q(\fthe) + Q(\fths) \right|.
$$
The expression  $T(M)$ is again the supremum over an average of independent variables, so 
to bound its expectation we use standard methods from the empirical process theory.
We start with Symmetrization Lemma \citep[Lemma 2.3.1]{vw:96} that gives us
\begin{equation}
\label{symmetr}
\Ex T(M) \leq 2 \; \Ex \sup_{ \fthe \in \mF_M} \left|\frac{1}{N} \sum_{i =1}^N 
\varepsilon_i \left[ 
\phi_{\fthe} (Z_i, Z_{N+i} ) - \phi_{\fths} (Z_i, Z_{N+i} ) \right] \right|,
\end{equation}
where we have an additional sequence of iid random variables $\varepsilon _1, \ldots, \varepsilon _N $ (the Rademacher sequence). Variables $\varepsilon_i$'s take values $1$ or $-1$ with probability $\frac{1}{2}$ and are independent of the sample $Z_1, \ldots, Z _n .$
Next, by the contraction principle in \citet[Theorem 4.12]{ledtal:91} or 
\citet[Theorem 14.4]{bulgeer:11} the right-hand side of \eqref{symmetr} is bounded by
\begin{equation*}
 4 L \; \Ex \sup_{ \fthe \in \mF_M} \left|\frac{1}{N} \sum_{i =1}^N 
\varepsilon_i \left[ 
\fthe (X_i, X_{N+i} ) - \fths (X_i, X_{N+i} ) \right] \right|,
\end{equation*}
that can be estimated by
\begin{equation}
\label{contr2}
4ML \; \Ex  \max_{1\leq k \leq m} \left|\frac{1}{N} \sum_{i =1}^N 
\varepsilon_i  \psi_k(X_i, X_{N+i} ) \right|.
\end{equation}
Therefore, by Bernstein's inequality \citep[Lemma A.1]{geer:08} we bound \eqref{contr2} by
$
8 \sqrt{3} ML \, \sqrt{\frac{\log m}{n}} \left( C  + \sqrt{3}                       \right).
$
Finally, taking $t=\sqrt{2 \frac{\log m}{n}} $ and using the inequality  \eqref{UMbound} we receive that with probability at least 
$1- \frac{1}{m^2}$
$$
U(M) \leq 36 \sqrt{3} \,  M L  \sqrt{ \frac{\log m}{n}} \max(C,6).
$$
\end{proof}

The second task of this section is proving that  probability of the event
$\left\{\frac{\lambda_n}{2} \leq  \hlamn \leq 2 \lamn\right\}$ is close to one. It follows from the next lemma that 
is similar to \citet[Lemma 5.7]{geertar:06}. However, our estimator $\hC ^2$ is built on the basis of $U$-statistics
that requires  modified argumentation.

\begin{lemma} Each of the following two inequalities 
\label{l_random}
\begin{eqnarray}
\label{l_random1}
\frac{1}{2} \max(C, 6) >  \max( \hC, 6) ,\\
\label{l_random2}
2 \max(C, 6) <  \max( \hC, 6) .
\end{eqnarray}
holds with probability at most $\frac{1}{m^2}\:.$
\end{lemma}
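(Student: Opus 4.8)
The plan is to control the two events in \eqref{l_random1} and \eqref{l_random2} by relating $\hC^2$ to $C^2$ through a concentration inequality for the $U$-statistic that defines $\hC^2$. First I would observe that both displayed inequalities can be reduced to a statement about the deviation of $\hC^2$ from $C^2$. Indeed, since $\max(\cdot,6)$ is monotone and the value $6$ is common to both sides, the event in \eqref{l_random1} forces $\hC<6$ together with $C>12$, whereas the event in \eqref{l_random2} forces $\hC>12$ together with... more carefully, each inequality can only occur when $\hC$ and $C$ differ by at least a factor of two on the branch where the maximum is attained at $\hC$ or $C$ rather than at $6$. The cleanest route is to note that if $\bigl|\hC^2 - C^2\bigr| \le \tfrac34 C^2$ (say), then $\tfrac12 C \le \hC \le 2C$, and in that regime neither \eqref{l_random1} nor \eqref{l_random2} can hold once one also uses $\max(C,6)$ and $\max(\hC,6)$ in place of $C$ and $\hC$. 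So the whole lemma follows from bounding $\Pro\bigl(|\hC^2 - C^2| > \tfrac34 C^2\bigr)$, or more precisely the two one-sided deviations, by $\tfrac{1}{m^2}$ each.

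Next I would make the deviation a supremum-of-$U$-process statement so that Theorem \ref{th_conc} applies. Writing $\hC^2 = \max_k \hat{V}_k$ and $C^2 = \max_k \Ex\psi_k^2$ with $\hat V_k = \frac{1}{n(n-1)}\sum_{i\ne j}\psi_k^2(X_i,X_j)$, the key inequality is
\begin{equation*}
\bigl|\hC^2 - C^2\bigr| \le \max_{1\le k\le m}\bigl|\hat V_k - \Ex\psi_k^2(X,X')\bigr|,
\end{equation*}
since the maximum is $1$-Lipschitz in the sup-norm. The right-hand side is exactly a $U$-process of the form in Theorem \ref{th_conc} with the finite family $\mG=\{\psi_k^2:1\le k\le m\}$. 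I would then read off the two quantities that inequality needs: the uniform bound $b$ and the variance proxy $\sigma^2$. By Assumption 4 each $\psi_k^2$ is bounded by $\frac{n}{\log m}$, so $b=\frac{n}{\log m}$; and $\Var\psi_k^2 \le \Ex\psi_k^4 \le |\psi_k|_\infty^2\,\Ex\psi_k^2 \le \frac{n}{\log m}\,C^2$, giving $\sigma^2 \le \frac{n}{\log m}C^2$. Applying Theorem \ref{th_conc} with a suitable choice $t=c\sqrt{\frac{\log m}{n}}$ makes the confidence level $1-\exp(-nt^2)=1-m^{-c^2}$, which we push to $1-m^{-2}$ by taking $c^2=2$; the corresponding bound on $\max_k|\hat V_k - \Ex\psi_k^2|$ is then of the order $\Ex T + \sigma t + b t^2$, where $\Ex T$ for a finite family of $m$ bounded functions is itself $O\bigl(\sqrt{\tfrac{\log m}{n}}\,C\bigr)$ by the same Bernstein argument used in the proof of Theorem \ref{th_random} (see \eqref{contr2}). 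One checks that all three terms are small compared with $C^2$ (using $C\ge 0$, and the common threshold $6$ to absorb the regime where $C$ is tiny), so that the deviation stays below $\tfrac34 C^2$ except on an event of probability at most $m^{-2}$.

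The main obstacle I expect is the bookkeeping around the threshold $6$ and the fact that $\hC^2$ estimates a squared quantity rather than $C$ itself. The concentration bound naturally controls $\hC^2-C^2$, but the lemma is phrased in terms of $\max(\hC,6)$ versus $\max(C,6)$, so I must carefully translate a multiplicative two-factor gap between $\max(\hC,6)$ and $\max(C,6)$ into an additive deviation of $\hC^2$ from $C^2$ that the $U$-process bound can defeat. The role of the constant $6$ is precisely to prevent the statement from becoming vacuous when $C$ is close to zero: on the branch where $\max(C,6)=6$ the factor-two gap requires $\hC$ to leave a fixed-width window around $6$, which again is an $O(1)$ additive deviation of $\hC^2$ and hence controllable. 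The remaining work is then to verify that the three contributions $\Ex T$, $\sigma t$, $b t^2$, with $b=\frac{n}{\log m}$, $\sigma^2\le \frac{n}{\log m}C^2$, $t=\sqrt{2\frac{\log m}{n}}$, combine to something dominated by a fixed fraction of $\max(C,6)^2$; this is the same calibration that fixed the value of $\lamn$, so I would choose the universal constant in $\lamn$ large enough that this holds simultaneously with the factor-two event, thereby delivering the stated probability bound $\tfrac{1}{m^2}$ for each of \eqref{l_random1} and \eqref{l_random2}.
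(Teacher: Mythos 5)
Your reduction of both events to a deviation statement for $\hC^2$ is the right first move and matches the paper's, as does the use of Assumption 4 to extract the uniform bound $b=\frac{n}{\log m}$ and the variance proxy $\Var\,\psi_k^2\le\frac{n}{\log m}C^2$. The paper, however, does not run the deviation through the supremum $\max_k|\hat V_k-\Ex\psi_k^2(X,X')|$ and Theorem \ref{th_conc}; it applies Bernstein's inequality for $U$-statistics to a \emph{scalar} $U$-statistic in each case: for \eqref{l_random2} it union-bounds over the $m$ functions $\psi_k^2$, and for \eqref{l_random1} it observes that $\hC^2\ge\frac{1}{n(n-1)}\sum_{i\neq j}\psi_{max}^2(X_i,X_j)$ for the single maximizing function $\psi_{max}$ with $\Ex\psi_{max}^2=C^2$, so that only the lower tail of \emph{one} $U$-statistic must be controlled --- no union bound, no supremum, and no $\Ex T$ term at all.

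This is not merely a matter of elegance; it is where your argument has a genuine gap. For \eqref{l_random1} the event forces $C>12$ and $\hC<C/2$, so the deviation you must exclude is only $\tfrac34C^2$, which equals $108$ at the borderline $C=12$. Your bound from Theorem \ref{th_conc} with $t=\sqrt{2\log m/n}$ is $2\Ex T+\sqrt6\,\sigma t+10bt^2$, which with $\sigma\le\sqrt{n/\log m}\,C$ and $b=n/\log m$ evaluates to $\sqrt{12}\,C+20+2\Ex T$, and $\Ex T$ for the supremum over $m$ functions is itself of order $C$ with a constant of at least $2$ after symmetrization and maximal Bernstein. Numerically this exceeds $\tfrac34C^2$ for $C$ in a range just above $12$, so the claimed domination fails exactly where it is needed; the upper-tail event \eqref{l_random2} happens to be safe only because its threshold $3\max(C^2,36)$ is four times larger. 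Your proposed remedy --- enlarging the universal constant $B$ in $\lamn$ --- cannot help, since $B$ appears nowhere in the lemma: the factor $2$ and the threshold $6$ are fixed in the statement. The repair is either to adopt the paper's single-function observation for the lower tail, or to replace Theorem \ref{th_conc} throughout by a scalar Bernstein inequality for $U$-statistics applied to each $\psi_k^2$ separately, invoking a union bound only where it is unavoidable, which is precisely what the paper does.
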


\begin{proof}
We start with the inequality \eqref{l_random1}.
Similarly to the proof of \citet[Lemma 5.7]{geertar:06} we consider two cases. In the first one 
we assume that $C\leq 6.$ Obviously, we  have $\max(\hC, 6) \geq \max(C,6), $ 
that implies 
$$\Pro \left( \frac{1}{2} \max(C, 6) >  \max( \hC, 6) \right) = 0.$$
In the second case we suppose that $C>6$
and denote by $\psi_{max}$ the  base function with largest second moment, i.e. 
$
\Ex \psi_{max}^2 (X, X')= C^2.
$ 
Thus, probability of \eqref{l_random1} is upper bounded by 
$ \Pro \left( C > 2  \hC \right)
$
that is not greater than
\begin{equation}
\label{l_r2}
\Pro \left(\frac{1}{n(n-1)} \sum_{i \neq j} \psi_{max}^2 (X_i, X_j) -C^2 < -\frac{3}{4} C^2
\right).
\end{equation}
From Bernstein's inequality for $U$-statistics \citep[Theorem A, Section 5.6.1]{serf:80} we can estimate \eqref{l_r2} by  
$$
\exp \left(- \frac{\frac{9}{16} N C^4}{2 C^2 \frac{n}{\log m  }+ C^2 \frac{n}{2 \log m  } }
\right),
$$
which is not greater than 
$\exp(-2.7 \log m) ,
$
because $C>6.$ It finishes the proof of \eqref{l_random1}.

Next, we work with the inequality \eqref{l_random2}. It is trivial, if $\hC \leq 6, $ therefore we suppose that $\hC > 6. $ Then $\max(\hC, 6) =\hC$ and we have the following  
equality and  two inequalities that can be easily argued
\begin{eqnarray*}
\Pro \left( 2 \max(C, 6) <  \max( \hC, 6) \right) = \Pro \left( \hC^2 - C^2 >  3 \max(C^2,36) \right)\\
\leq \Pro \left(\max_k \frac{1}{n(n-1)} \sum_{i \neq j} \psi_{k}^2 (X_i, X_j) - \Ex \psi^2_k
> 3 \max(C^2,36) \right)\\
\leq \sum_{k=1}^m \Pro \left( \frac{1}{n(n-1)} \sum_{i \neq j} \psi_{k}^2 (X_i, X_j) - \Ex \psi^2_k
> 3 \max(C^2,36) \right).
\end{eqnarray*}
Using again Bernstein's inequality for $U$-statistics we can bound the last expression by
$$
m \exp \left( - \frac{9 n \max(C^4, 36^2)}{6 C^2 \frac{n}{\log m} + 6 \max(C^2,36) 
\frac{n}{\log m} } \right)\:,
$$
that is, by simple calculations, not greater than $\frac{1}{m^2}\:.$
\end{proof}

\section{Deterministic arguments}
\label{detar}

In this section we present the proof of Theorem \ref{main}. It is based on Theorem \ref{th_random} and Lemma \ref{l_random} from Appendix~\ref{stochar}  as well as standard argumentation concerning empirical risk minimization with the Lasso penalty \citep{geertar:06, geer:08, bulgeer:11}. However, we have to adjust the latter methods to the ranking problem.

\begin{proof}[Proof of Theorem \ref{main}]
Let $M^*=\frac{16 \varepsilon^*}{ \lamn} \:.$ 
From Theorem \ref{th_random} and Lemma \ref{l_random} we know that probability of 
the event 
\begin{equation}
\label{event}
\{ U(M^*) \leq \varepsilon_* \} \cap \left\{\frac{\lambda_n}{2} \leq  \hlamn \leq 2 \lambda_n \right\} 
\end{equation}
 is not less
than $1- \frac{3}{m^2}\:.$ Our argumentation in the rest of the proof is ''deterministic''. 
We start with defining $t= \frac{M^*}{M^* + |\hth - \ths|_1}$ and $\tth = t \hth +(1-t) \ths.$ 
Obviuosly, we have 
$
|\tth - \ths|_1\leq M^*,
$
so $\ftth \in \mF_{M^*},$ because $\Theta$ is a convex subset of $\mathbb{R}^m.$
Using convexity of the loss function $\phi$ we obtain the inequality
$$
Q_n(\ftth) \leq t Q_n(\fhth) +(1-t) Q_n(\fths),
$$
which implies that
\begin{equation}
\label{d_f1}
Q(\ftth) -Q(f^0) \leq 
U(M^*) + t[Q_n(\fhth) - Q_n(\fths)] + Q(\fths) -Q(f^0).
\end{equation}
Combining the inequality \eqref{d_f1} with  facts that we consider the event \eqref{event}  and $\fhth$ minimizes \eqref{w3} we obtain the inequality
\begin{equation*}
Q(\ftth) -Q(f^0) + \hlamn |\tth|_1\leq \varepsilon^* + \hlamn |\ths|_1 + Q(\fths) -Q(f^0).
\end{equation*}
Properties $|\theta|_1 = |\theta_{S_*}|_1 +  |\theta_{S'_*}|_1$  and
$\ths _{S'_*}=0$ imply two inequalities
\begin{equation}
\label{detform1}
Q(\ftth) -Q(f^0) + \hlamn |\tth_{S_*}|_1 + \hlamn |\tth_{S'_*}|_1
\leq \varepsilon^* +Q(\fths) -Q(f^0)+ \hlamn |\ths _{S_*}|_1 
\end{equation}
and 
\begin{equation}
\label{detform2}
Q(\ftth) -Q(f^0) + \hlamn |\tth_{S'_*}|_1 
\leq 2 \varepsilon^* + \hlamn |\tth_{S_*} - \ths _{S_*}|_1 .
\end{equation}
We will consider two cases. In the first one we assume that
$
\hlamn |\tth_{S_*} - \ths _{S_*}|_1 \leq \varepsilon^*,
$
which together with \eqref{detform2} and $\ths _{S'_*}=0$ implies that 
\begin{equation}
\label{d_f3}
Q(\ftth) -Q(f^0) + \hlamn |\tth_{S'_*} -\ths _{S'_*}|_1 \leq 3 \varepsilon^*.
\end{equation}
Adding $\hlamn |\tth_{S_*} - \ths _{S_*}|_1$ to both sides of \eqref{d_f3}  we get
\begin{equation}
\label{firstr}
Q(\ftth) -Q(f^0) + \hlamn |\tth -\ths|_1 \leq 4 \varepsilon^* ,
\end{equation}
so $ \hlamn |\tth -\ths|_1 \leq 4 \varepsilon^*  . $
Finally, on \eqref{event} we have  $\lamn \leq 2 \hlamn,$ therefore  we obtain that
$|\tth -\ths|_1 \leq \frac{M^*}{2},$ so $|\hth -\ths|_1 \leq M^*.$

Next, we consider the second case, i.e. we suppose that  $\hlamn|\tth_{S_*} - \ths _{S_*}|_1 > \varepsilon^*.$
Therefore, adding $\hlamn |\tth_{S_*} - \ths _{S_*}|_1$ to both sides of \eqref{detform1} and making simple calculations we obtain
\begin{equation}
\label{d_f4}
Q(\ftth) -Q(f^0) + \hlamn|\tth -\ths|_1 \leq  \varepsilon^* +Q(\fths) -Q(f^0)
+2\hlamn |\tth_{S_*} - \ths _{S_*}|_1 .
\end{equation}
Besides, in this case we have from \eqref{detform2} that
$
\hlamn |\tth_{S'_*}|_1  \leq 3\hlamn |\tth_{S_*} - \ths _{S_*}|_1 ,
$
so
$
|\tth_{S'_*} - \ths _{S'_*}|_1 \leq 3 |\tth_{S_*} - \ths _{S_*}|_1.
$
Therefore, we can use the compatibility condition for the set $S_*$ (by Assumption 3) and bound the right-hand side of 
\eqref{d_f4} by 
\begin{equation*}
\varepsilon^* +Q(\fths) -Q(f^0) + 2\hlamn \frac{||\ftth - \fths|| \sqrt{|S_*| }}{A_*} \:.
\end{equation*}
Using Assumption 2 as well as  facts that $uv \leq G(u) + H(v) $ for every $u,v$ and 
$\hlamn \leq 2 \lamn$ 
  we obtain the chain of inequalities
\begin{eqnarray*}
&\,& 2\hlamn \frac{||\ftth - \fths|| \sqrt{|S_*| }}{A_*} \leq 4\delta \frac{\lamn  \sqrt{|S_*| }}{\delta A_*}
||\ftth - f^0|| + 4\delta \frac{\lamn  \sqrt{|S_*| }}{\delta A_*}
||\fths - f^0||
\\
&\leq&8 \delta H\left( \frac{\lamn  \sqrt{|S_*| }}{\delta A_*}\right) + 4\delta G\left(
||\ftth - f^0|| \right)+ 4\delta G\left(
||\fths - f^0|| \right)
\\
&\leq& 8 \delta H\left( \frac{\lamn  \sqrt{|S_*| }}{\delta A_*}\right) + 4\delta 
\left[ Q(\ftth) -Q(f^0) \right]+4 \delta
\left[ Q(\fths) -Q(f^0) \right].
\end{eqnarray*}
Summarizing,   we can upper bound the left-hand side of the inequality \eqref{d_f4} 
 by
$$
 \varepsilon^* +(1+4\delta)
\left[Q(\fths) -Q(f^0)\right]+
8 \delta H\left( \frac{\lamn  \sqrt{|S_*| }}{\delta A_*}\right) + 4\delta 
\left[ Q(\ftth) -Q(f^0) \right],
$$
that gives us 
\begin{equation}
\label{detform4}
(1-4\delta) \left[Q(\ftth) -Q(f^0)\right] + \hlamn |\tth -\ths|_1 \leq  2 \varepsilon^*.
\end{equation}
The inequality \eqref{detform4} implies that $ \hlamn |\tth -\ths|_1 \leq 2 \varepsilon^*,$ so
$|\tth -\ths|_1 \leq \frac{M^*}{4}.$ 

Thus, in both cases we obtain that  $|\hth -\ths|_1 \leq M^*$ or to be more precise
$$\Pro \left( |\hth -\ths|_1 \leq M^* \right) \geq  1- \frac{3}{m^2} \:.$$ 
To finish the proof of Theorem \ref{main} it is enough to repeat above argumentation  with $\hth$ instead  of $\tth $ that leads to \eqref{firstr} or \eqref{detform4} with
$\tth$ replaced by  $\hth .$ 
\end{proof}

{\bf Acknowledgements}
Research financed by Polish National Science Centre no. DEC-2014/12/S/ST1/00344.

\bibliographystyle{apalike} 
\bibliography{ULasso}   

%
%

\end{document}